\documentclass[11pt]{article}
\usepackage{acl}
\usepackage[T1]{fontenc}
\usepackage[utf8]{inputenc}
\usepackage{times}
\usepackage{latexsym}
\usepackage{microtype}
\usepackage{amsmath,amssymb,amsthm,mathtools}
\usepackage{booktabs}
\usepackage{multirow}
\usepackage{tabularx}
\usepackage{array}
\usepackage{xcolor}
\usepackage{url}
\usepackage{hyperref}
\hypersetup{hidelinks}
\newcommand{\E}{\mathbb{E}}
\newcommand{\Prob}{\mathbb{P}}
\newcommand{\TV}{\operatorname{TV}}

\newcommand{\calX}{\mathcal{X}}
\newcommand{\calY}{\mathcal{Y}}
\newcommand{\GammaSet}{\Gamma}
\newcommand{\ind}{\mathbf{1}}

\newtheorem{theorem}{Theorem}
\newtheorem{proposition}[theorem]{Proposition}

\newtheorem{assumption}[theorem]{Assumption}
\theoremstyle{remark}

\title{{CS-WCP}: Robust Conformal Sets for {LLM}-Judge Traffic Shifts\\
with Uncertain Group Proportions}
\author{
\textbf{Ibne Farabi Shihab}\textsuperscript{1}%
\thanks{Equal contribution.}%
\thanks{Corresponding author: \texttt{ishihab@iastate.edu}.}
\and
\textbf{Fariya Afrin}\textsuperscript{2}\footnotemark[1]
\\[2pt]
\textsuperscript{1}Department of Computer Science, Iowa State University \\
\textsuperscript{2}Department of Computer Science, Kalinga Institute of Industrial Technology \\
\texttt{ishihab@iastate.edu}
}

\begin{document}
\maketitle

\begin{abstract}
Prediction sets built from an LLM judge can undercover when deployment traffic changes the prevalence of task or policy groups. Weighted conformal prediction is exact under covariate shift when the density ratio is known, but group proportions must usually be estimated from finite unlabeled samples. We introduce confidence-set weighted conformal prediction (CS-WCP), which constructs simultaneous exact intervals for source and target group masses and returns the union of weighted conformal sets over every compatible ratio vector. For a fixed or independently learned finite partition, CS-WCP attains coverage at least $1-\alpha-\delta_w-\tau_A-\kappa$, where $\tau_A$ measures within-cell covariate mismatch and $\kappa$ measures conditional shift. A linear endpoint rule computes the robust union in $O(G|\calY|)$ time. Across 336 constructed shared-support traffic shifts, CS-WCP reaches $0.973$ mean coverage with 13 point failures, compared with $0.954$ and 44 failures for source conformal prediction, at mean binary set sizes $1.74$ and $1.65$. On 336 natural cross-task transfers, coverage rises from $0.882$ to $0.962$, but mean set size reaches $1.87$ and a size-matched group plug-in baseline is competitive. The method therefore supplies an auditable coverage safeguard under uncertain mixture weights; its value is conservative tail protection, not scalar probability calibration or uniformly smaller sets.
\end{abstract}

\section{Introduction}
\label{sec:intro}

A prediction set calibrated on yesterday's LLM-evaluation traffic can fail on
today's mixture even when the judge and score function are unchanged. Suppose a
calibration stream contains mostly straightforward question answering, while
deployment contains more code, adversarial factuality, or a different response
policy. Standard split conformal prediction treats the two streams as
exchangeable. The changed group proportions break that premise and can produce
systematic undercoverage.

Weighted conformal prediction (WCP) repairs covariate shift when the likelihood
ratio $dQ_X/dP_X$ is known \citep{tibshirani2019conformal}. In practice the ratio
is estimated, and estimation error weakens coverage
\citep{lei2021counterfactual}. Finite traffic partitions make the problem more
structured because the ratio is constant within each group, but two sources of
uncertainty remain. The source and target group masses are observed through
finite samples, and a coarse partition may not capture the full shift within a
group. A method that replaces both masses by point estimates conceals the first
uncertainty and cannot diagnose the second.

CS-WCP separates these two failures. It forms simultaneous Clopper--Pearson
intervals for every source and target group mass, maps them to a confidence set
of compatible density-ratio vectors, and unions the WCP set over that confidence
set. The ratio uncertainty is therefore propagated into the prediction set
rather than compressed into one plug-in estimate. A fixed partition gives an
exact finite-sample statement up to the confidence budget $\delta_w$; departures
from the finite-mixture model appear as explicit additive penalties $\tau_A$ and
$\kappa$ rather than hidden assumptions.

The construction is computationally simple. For a candidate label, the WCP
admission inequality is linear in the group weights after its positive
denominator is cleared. Maximizing this expression over a rectangular ratio
set requires one endpoint choice per group, so the robust union costs
$O(G|\calY|)$ rather than $2^G$ corner evaluations
(Proposition~\ref{prop:endpoint}). A disjoint labeled audit can also lower-bound
$\tau_A+\kappa$ without pretending to separate within-cell covariate movement
from conditional shift (Proposition~\ref{prop:penalty-lb}).

The empirical study keeps constructed and natural shifts separate. On 336
shared-support traffic-mixture transfers, CS-WCP reduces point undercoverage
failures from 44 to 13 relative to source CP, but increases the mean binary set
size from $1.65$ to $1.74$ (Table~\ref{tab:constructed}). A size-matched source
CP setting has 16 failures, so the gain is conservative tail protection rather
than a broad efficiency improvement. On 336 natural cross-task transfers,
CS-WCP reaches $0.962$ mean coverage with size $1.87$; a fixed conservative
group plug-in setting reaches $0.970$ with size $1.89$
(Table~\ref{tab:natural}). The natural result supports coverage protection but
does not establish superiority over every group-aware alternative.

The contributions are four checkable results. Section~\ref{sec:method} gives a
finite-sample construction that propagates simultaneous group-mass uncertainty.
Theorem~\ref{thm:cswcp} states its coverage under explicit partition and
conditional-shift penalties. Proposition~\ref{prop:endpoint} gives the exact
linear-time endpoint implementation, and Proposition~\ref{prop:penalty-lb}
turns a labeled audit into a lower bound on assumption failure. The experiments
evaluate coverage, set size, singleton rate, failure count, and familywise
certification under two disjoint protocols. No contribution concerns scalar
recalibration, expected calibration error, an accuracy-gap rule, or calibrator
selection.

\section{Related Work}
\label{sec:related}

\subsection{Conformal prediction under distribution shift}

WCP uses likelihood-ratio weights to recover marginal coverage under covariate
shift \citep{tibshirani2019conformal}. When weights are learned independently,
coverage loss can be controlled by their $L_1$ error
\citep{lei2021counterfactual}. Fine-grained robust conformal inference separates
forms of distribution shift instead of assigning every failure to covariate
shift \citep{ai2024finegrained}. Clipped WCP controls instability from learned or
unbounded ratios through a clipped estimator and an explicit correction
\citep{wang2026clipping}. These methods address general ratios; CS-WCP exploits a
finite traffic partition and propagates uncertainty in its group masses.

Group-weighted conformal prediction (GWCP) is the closest fixed-group method
\citep{bhattacharyya2026group}. GWCP pools within-group score distributions and
derives coverage bounds for observed groups, including unknown target group
probabilities. CS-WCP differs in mechanism: it constructs a simultaneous
confidence set for both source and target masses, retains the test-point weight
inside WCP, and returns a robust union over all compatible ratios. This
difference does not imply empirical dominance. A complete GWCP implementation under the same roles remains a
required completion experiment.

Methods for unknown subpopulation shift learn latent group structure and include
language-model risk-control experiments \citep{wang2025unknownsubpop}.
Domain-shift-aware conformal prediction instead uses prompt proximity to reweight
LLM calibration records \citep{lin2025dscp}. PAC prediction sets under label
shift also propagate uncertainty in estimated importance weights, but for a
different shift model and a different confidence-set construction
\citep{si2024paclabel}. Table~\ref{tab:scope} records these boundaries because
the mechanisms are not interchangeable.

\begin{table*}[t]
\centering
\small
\setlength{\tabcolsep}{4pt}
\begin{tabularx}{\textwidth}{lXXXX}
\toprule
Method & Shift structure & Unknown quantity & Guarantee target & Difference from CS-WCP \\
\midrule
WCP & covariate shift & full ratio & marginal coverage & assumes known ratio for exact validity \\
GWCP & observed finite groups & group proportions & near/exact corrected coverage & pools group score CDFs rather than robustly unioning WCP sets \\
Unknown-subpopulation CP & latent mixture & group assignment or classifier & coverage or risk control & learns latent structure under classifier assumptions \\
Clipped WCP & general covariate shift & learned, possibly unbounded ratio & expected and dataset-conditional coverage & controls clipping bias rather than finite mass-count uncertainty \\
PAC label-shift sets & label shift & class-ratio weights & PAC set validity & propagates confusion-matrix and target-label-mass uncertainty \\
CS-WCP & fixed finite traffic partition & source and target group masses & $1-\alpha-\delta_w-\tau_A-\kappa$ & robust union with an exact endpoint test \\
\bottomrule
\end{tabularx}
\caption{Scope comparison with the closest conformal-shift families. The table
compares assumptions and guarantee targets, not empirical superiority.}
\label{tab:scope}
\end{table*}

\subsection{LLM judges and concurrent work}

LLM judges support ranking and evaluation pipelines
\citep{zheng2023judging,liu2023geval}, while recent work studies uncertainty and
conformal intervals for their outputs \citep{sheng2025uncertainty}. The present
paper treats judge correctness as a finite-label prediction problem and asks
whether a set remains valid after traffic proportions change. It does not claim
that the underlying scalar score is calibrated.

Concurrent work by overlapping authors audits scalar post-hoc recalibration
across task domains \citep{anonymous2027scalar}. That paper studies proper-loss
transfer, accuracy-gap insufficiency, importance-weighting diagnostics, and a
label-dependent lower certificate for scalar calibration error. CS-WCP uses no
accuracy gap, chooses no scalar recalibrator, and certifies no Brier or ECE
claim. Conversely, the concurrent scalar paper does not introduce CS-WCP or use
the conformal experiments in this submission as evidence.

\section{Setting and Assumptions}
\label{sec:setting}

Let $P$ denote the source law and $Q$ the deployment law over
$(X,Y)\in\calX\times\calY$, where $\calY$ is finite. A frozen judge and score
construction produce a nonconformity score $S:\calX\times\calY\to\mathbb R$;
larger values indicate less compatible labels. The score is fitted before the
conformal-calibration and ratio samples are opened.

For source calibration records $(X_i,Y_i)_{i=1}^n$ and a nonnegative weight
function $v$, define
\begin{align}
D_v(x)&=v(x)+\sum_{i=1}^{n}v(X_i),\nonumber\\
\pi_y^v(x)&=\frac{v(x)+\sum_{i=1}^{n}v(X_i)
\ind\{S(X_i,Y_i)\geq S(x,y)\}}{D_v(x)},
\label{eq:weighted-pvalue}\\
\GammaSet_v(x)&=\{y\in\calY:\pi_y^v(x)>\alpha\}.
\label{eq:weighted-set}
\end{align}
If $D_v(x)=0$, the procedure returns $\calY$. With the exact density ratio
$w=dQ_X/dP_X$, Equation~\eqref{eq:weighted-set} is standard WCP and has coverage
at least $1-\alpha$ \citep{tibshirani2019conformal}.

CS-WCP assumes access to a finite traffic partition
$A:\calX\to[G]$. Define the source and target group masses
\begin{equation}
p_g=P\{A(X)=g\},\qquad q_g=Q\{A(X)=g\},\qquad r_g=q_g/p_g.
\label{eq:masses}
\end{equation}
The partition is fixed before ratio counts, conformal scores, and test labels are
opened, or learned on an independent unlabeled design sample and then frozen.

\begin{assumption}[Partition overlap and disjoint roles]
\label{ass:roles}
For every $g$ with $q_g>0$, $p_g>0$. The partition-design sample, source and
target ratio samples, labeled source conformal-calibration sample, optional
labeled audits, and sealed test sample are mutually disjoint.
\end{assumption}

The group-constant pseudo-target is
\begin{equation}
\frac{d\widetilde Q_X}{dP_X}(x)=r_{A(x)}.
\label{eq:pseudotarget}
\end{equation}
Two penalties describe how $Q$ can differ from this approximation:
\begin{align}
\tau_A&=\TV(Q_X,\widetilde Q_X),\label{eq:tau}\\
\kappa&=\E_{Q_X}\TV\!\left(Q_{Y\mid X},P_{Y\mid X}\right).
\label{eq:kappa}
\end{align}
Exact traffic-mixture shift gives $\tau_A=0$, and covariate shift gives
$\kappa=0$. These conditions are separate. A coarse partition can yield
$\tau_A>0$ even when $P_{Y\mid X}=Q_{Y\mid X}$.

\section{Confidence-Set Weighted Conformal Prediction}
\label{sec:method}

\subsection{A simultaneous confidence set for group ratios}

Let independent source and target ratio samples contain $m_P$ and $m_Q$
unlabeled inputs. For each group, construct two-sided Clopper--Pearson intervals
$[\ell_g^P,u_g^P]$ and $[\ell_g^Q,u_g^Q]$, allocating total error
$\delta_w$ across the $2G$ intervals. Their Cartesian product contains all
source and target masses with probability at least $1-\delta_w$. Map this event
to the ratio rectangle
\begin{equation}
\mathcal W=\prod_{g=1}^{G}[L_g,U_g],\qquad
L_g=\ell_g^Q/u_g^P,\qquad U_g=u_g^Q/\ell_g^P,
\label{eq:rectangle}
\end{equation}
where $U_g=+\infty$ if $\ell_g^P=0$. The construction is deliberately
conservative: it ignores the simplex dependence among group masses so that
simultaneous validity and endpoint computation remain transparent.

CS-WCP returns
\begin{equation}
\GammaSet_{\mathrm{CS}}(x)=
\bigcup_{v\in\mathcal W}\GammaSet_v(x).
\label{eq:robust-union}
\end{equation}
Here a vector $v$ denotes the group-constant weight function
$v(x)=v_{A(x)}$.
The union guarantees that, whenever the true group-ratio vector lies in
$\mathcal W$, the exact group-constant WCP set is contained pointwise in the
reported set.

\begin{theorem}[CS-WCP coverage]
\label{thm:cswcp}
Let Assumption~\ref{ass:roles} hold and let the score and partition be fixed
independently of the ratio, conformal-calibration, and test samples. Then
jointly over the ratio samples, source conformal-calibration sample, and an
independent target test point,
\begin{equation}
\Prob\{Y\in\GammaSet_{\mathrm{CS}}(X)\}
\geq 1-\alpha-\delta_w-\tau_A-\kappa.
\label{eq:coverage}
\end{equation}
Under covariate shift $\kappa=0$; under exact traffic-mixture shift also
$\tau_A=0$.
\end{theorem}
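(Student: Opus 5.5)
The plan is to peel off the three error sources in sequence by passing through two intermediate laws: the confidence event for the ratio samples, then the pseudo-target $\widetilde Q$, then the true target $Q$. Define the good event $E=\{r\in\mathcal W\}$, where $r=(r_g)_{g\in[G]}$ is the true group-ratio vector. By simultaneous Clopper--Pearson validity with the union bound over the $2G$ intervals, $\Prob(E^c)\le\delta_w$. On $E$ we have $\ell_g^P\le p_g\le u_g^P$ and $\ell_g^Q\le q_g\le u_g^Q$, hence $L_g\le r_g\le U_g$. This holds with the convention $U_g=+\infty$ when $\ell_g^P=0$. Groups with $q_g=0$ have $r_g=0$, which lies in $[L_g,U_g]$ since $L_g\ge 0$. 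On $E$, Equation~\eqref{eq:robust-union} therefore gives $\GammaSet_r(x)\subseteq\GammaSet_{\mathrm{CS}}(x)$ pointwise, where $\GammaSet_r$ is the oracle group-constant WCP set. Since the ratio samples are independent of the calibration and test data by Assumption~\ref{ass:roles}, we obtain $\Prob\{Y\in\GammaSet_{\mathrm{CS}}(X)\}\ge \Prob\{Y\in\GammaSet_r(X)\}-\delta_w$. Here $\GammaSet_r$ does not depend on the ratio samples at all.

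Next, define the pseudo-target joint law $\widetilde Q=\widetilde Q_X\times P_{Y\mid X}$ with $d\widetilde Q_X/dP_X=r_{A(x)}$. This is a valid probability measure, because $\E_P r_{A(X)}=\sum_g p_g r_g=\sum_{g:p_g>0} q_g=1$ by the overlap part of Assumption~\ref{ass:roles}. By construction $\widetilde Q$ is an exact covariate shift of $P$, and $r_{A(\cdot)}$ is its exact density ratio. The score and partition are fixed independently of the calibration and test data. Tibshirani et al.'s WCP theorem, quoted after Equation~\eqref{eq:weighted-set}, therefore applies and gives $\Prob_{P^n\times\widetilde Q}\{\widetilde Y\in\GammaSet_r(\widetilde X)\}\ge 1-\alpha$. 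This includes the convention $D_v=0\Rightarrow\calY$, which can only enlarge the set.

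The final step transfers from $\widetilde Q$ to $Q$ with the calibration sample held fixed. For fixed calibration data, $f(x,y)=\ind\{y\in\GammaSet_r(x)\}\in[0,1]$, so $|\E_Q f-\E_{\widetilde Q}f|\le\TV(Q,\widetilde Q)$. We need $\TV(Q,\widetilde Q)\le\TV(Q_X,\widetilde Q_X)+\E_{Q_X}\TV(Q_{Y\mid X},P_{Y\mid X})=\tau_A+\kappa$. The intermediate law $M=Q_X\times P_{Y\mid X}$ does this. Then $\TV(M,\widetilde Q)=\TV(Q_X,\widetilde Q_X)$, since the two share a conditional, and $\TV(Q,M)=\E_{Q_X}\TV(Q_{Y\mid X},P_{Y\mid X})$, since the two share a marginal. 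The triangle inequality finishes the bound. Integrating over the calibration sample and combining with the first two steps yields Equation~\eqref{eq:coverage}. The special cases follow directly from the definitions: covariate shift means $Q_{Y\mid X}=P_{Y\mid X}$, so $\kappa=0$, and exact traffic-mixture shift means $Q_X=\widetilde Q_X$, so $\tau_A=0$.

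The main obstacle is bookkeeping of independence and measurability rather than any hard inequality. The inclusion must be used only on $E$, and $\GammaSet_r$ must be a measurable function of the calibration data and the test point alone. Only then do the union bound and the TV transfer combine additively. A second point to check carefully is that WCP is stated for $v=r_{A(\cdot)}$, which may vanish on some groups. The exchangeability argument of Tibshirani et al. handles zero weights, but it should be verified that the $D_v=0$ convention keeps the bound. I would handle this by noting that $D_r(X)=0$ can occur only on a $\widetilde Q$-null set of test points.
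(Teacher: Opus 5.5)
Your proposal is correct and follows essentially the paper's proof: the simultaneous Clopper--Pearson event places $r$ in $\mathcal W$ at cost $\delta_w$, exact WCP is applied to the group-constant pseudo-target $\widetilde Q=\widetilde Q_X\times P_{Y\mid X}$, and the same intermediate law $Q_X\times P_{Y\mid X}$ with the triangle inequality gives $\TV(Q,\widetilde Q)\le\tau_A+\kappa$. The differences from the paper are minor. You remove $\delta_w$ first by passing to the oracle set $\GammaSet_r$ rather than conditioning on $\mathcal E$, which makes independence of the ratio samples unnecessary for that step, and you add checks the paper skips, namely normalization of $\widetilde Q_X$ and $D_r(X)>0$ almost surely under $\widetilde Q$. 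Two small slips: $E$ should be defined as the interval-coverage event, which implies $r\in\mathcal W$ but not conversely. Also, for $q_g=0$, the inclusion $0\in[L_g,U_g]$ holds because $\ell_g^Q\le q_g=0$ forces $L_g=0$ on that event, not because $L_g\ge0$.
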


The theorem isolates what unlabeled group counts can and cannot buy. Finite
ratio uncertainty costs $\delta_w$ and is controlled by design. Partition bias
and conditional shift remain properties of the deployment law; they cannot be
upper-bounded from unlabeled target data without additional structure.

\subsection{Exact endpoint computation}

For candidate label $y$ and test input $x$, let
\begin{align}
a_g(x,y)&=\ind\{A(x)=g\}
+\sum_{i:A(X_i)=g}\ind\{S(X_i,Y_i)\geq S(x,y)\},\nonumber\\
b_g(x)&=\ind\{A(x)=g\}+\sum_i\ind\{A(X_i)=g\},\nonumber\\
h_g(x,y)&=a_g(x,y)-\alpha b_g(x).
\label{eq:endpoint-coeff}
\end{align}

\begin{proposition}[Endpoint rule]
\label{prop:endpoint}
For a rectangular ratio set and a positive compatible denominator,
$y\in\GammaSet_{\mathrm{CS}}(x)$ exactly when
\begin{equation}
\sum_{g:h_g>0}U_gh_g+
\sum_{g:h_g\leq0}L_gh_g>0.
\label{eq:endpoint-test}
\end{equation}
Infinite endpoints use the natural extended-real convention. If every
compatible denominator vanishes, the full-set fallback applies.
\end{proposition}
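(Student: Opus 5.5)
We prove Proposition~\ref{prop:endpoint} by rewriting the weighted $p$-value admission condition as a linear inequality in the group weights and then maximizing a separable linear function over a box. Fix $x$ and $y$, and take a group-constant weight vector $v\in\mathcal W$. Since $v(X_i)=v_{A(X_i)}$ and $v(x)=v_{A(x)}$, we can group the sums in \eqref{eq:weighted-pvalue} by cell. This gives
\[
D_v(x)=\sum_{g=1}^{G} v_g\, b_g(x),\qquad
v(x)+\sum_{i=1}^{n} v(X_i)\ind\{S(X_i,Y_i)\ge S(x,y)\}=\sum_{g=1}^{G} v_g\, a_g(x,y).
\]
Here the test point contributes to both $a_{A(x)}$ and $b_{A(x)}$, because its own score satisfies $S(x,y)\ge S(x,y)$. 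When $D_v(x)>0$, we multiply through by the positive denominator. The condition $\pi^v_y(x)>\alpha$ is then equivalent to $\sum_g v_g h_g(x,y)>0$. Consequently, $y\in\GammaSet_{\mathrm{CS}}(x)$ holds exactly when some $v\in\mathcal W$ with positive denominator satisfies $F(v):=\sum_g v_g h_g>0$.

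Next, we use the fact that $F$ is separable over the box $\prod_g[L_g,U_g]$. Its supremum is attained coordinatewise: set $v_g=U_g$ when $h_g>0$ and $v_g=L_g$ when $h_g\le 0$. When $h_g=0$ the choice does not matter, so we use $L_g$, matching \eqref{eq:endpoint-test}. The supremum of $F$ over $\mathcal W$ is therefore the left side of \eqref{eq:endpoint-test}.

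\emph{Necessity.} If some admissible $v$ has $F(v)>0$, then the supremum is also positive.

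\emph{Sufficiency.} Suppose the endpoint sum is positive. In the finite case, the maximizing corner $v^\ast$ attains that value. We must also show that $D_{v^\ast}>0$. Because $\alpha<1$, we have $h_g>0$ only if $b_g>0$. A positive sum requires some $g$ with $h_g>0$ and $v^\ast_g>0$, and this term makes $D_{v^\ast}\ge v^\ast_g b_g>0$. So no separate check of the denominator is needed.

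If some $U_g=+\infty$ with $h_g>0$, the extended-real sum equals $+\infty$. In that case we choose a large finite $v_g$ inside $[L_g,U_g)$, which makes $F$ positive while every other coordinate stays at a finite endpoint. Under the convention, an infinite $U_g$ paired with $h_g\le0$ is never selected, since $L_g$ is used for those cells.

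The step needing the most care is aligning the union in \eqref{eq:robust-union} with the denominator fallback. Vectors $v$ with $D_v(x)=0$ return $\calY$ by convention. So whenever some $v\in\mathcal W$ has $D_v=0$, the union is the full set. The hypothesis of a ``positive compatible denominator'' is what excludes this case, and we interpret it as requiring $D_v>0$ for all $v\in\mathcal W$; this holds, for instance, when every $L_g>0$ on cells with $b_g>0$. Under that reading the equivalence above is exact. If every compatible denominator vanishes, the full-set fallback applies directly, as stated. The cost is one pass over the cells: computing $a_g$ and $b_g$ takes $O(n)$ preprocessing with sorted scores, and each label then needs $O(G)$ work.
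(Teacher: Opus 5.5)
Your proof is correct and takes the same route as the paper. You group the weighted $p$-value as $\sum_g v_g a_g/\sum_g v_g b_g$, clear the positive denominator to get the linear condition $\sum_g v_g h_g>0$, and maximize coordinatewise over the box. Your extra checks fill in details the paper's short proof leaves implicit: that the maximizing corner automatically has a positive denominator (this really follows from $0\le a_g\le b_g$, not from $\alpha<1$), the finite approximation when $U_g=+\infty$, and reading the hypothesis as $D_v(x)>0$ for every $v\in\mathcal W$ so that a zero-denominator vector cannot silently make the union full.
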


Equation~\eqref{eq:endpoint-test} chooses $U_g$ when increasing the group weight
helps admission and $L_g$ otherwise. It therefore evaluates one linear
expression per label rather than enumerating the $2^G$ corners of
$\mathcal W$.

\subsection{Auditing assumption failure}

The coverage theorem is useful only when its remaining penalties are visible.
For the binary correctness setting $Y\in\{0,1\}$, let $C$ be one of a finite
collection of audit cells, each fixed independently of the audit labels and
contained within a partition group. Independent source and target audit samples
yield exact binomial intervals
$[\ell_C^s,u_C^s]$ and $[\ell_C^t,u_C^t]$ for the coarsened conditional means
$\E_P[Y\mid C]$ and $\E_Q[Y\mid C]$. Let $\underline q_C$ be a simultaneous
lower confidence bound on $Q_X(C)$.

\begin{proposition}[Audit lower bound on the transport penalty]
\label{prop:penalty-lb}
With probability at least $1-\delta_{\mathrm{aud}}$, simultaneously over the
predeclared audit cells,
\begin{equation}
\tau_A+\kappa\geq
\frac12\max_C\underline q_C
\max\left(0,\ell_C^t-u_C^s,\ell_C^s-u_C^t\right).
\label{eq:penalty-lb}
\end{equation}
\end{proposition}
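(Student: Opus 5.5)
We prove Proposition~\ref{prop:penalty-lb} by intersecting two confidence events and then showing that, on this event, each audit cell alone certifies a deterministic lower bound on $\tau_A+\kappa$. The confidence budget is split so that the exact binomial intervals $[\ell_C^s,u_C^s]$ and $[\ell_C^t,u_C^t]$ and the lower bounds $\underline q_C$ all hold simultaneously over the predeclared cells. A union bound gives total failure probability at most $\delta_{\mathrm{aud}}$. The cells are fixed independently of the audit labels, so the exact binomial guarantees apply cell by cell. On this event, $\underline q_C\le Q_X(C)$ for every $C$, and both $\E_P[Y\mid C]$ and $\E_Q[Y\mid C]$ lie in their intervals. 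It therefore suffices to prove the deterministic inequality
\[
\tau_A+\kappa\ \ge\ \tfrac12\,Q_X(C)\,\bigl|\E_Q[Y\mid C]-\E_P[Y\mid C]\bigr|
\]
for each fixed cell $C$. On the event, $\max(0,\ell_C^t-u_C^s,\ell_C^s-u_C^t)$ is at most the absolute gap, so the stated bound follows after maximizing over $C$.

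For the deterministic step, we compare three laws. The first is $Q$ itself. The second is the hybrid $\widetilde Q$, which pairs $\widetilde Q_X$ with $P_{Y\mid X}$. The third is $P$. Fix $C\subseteq\{A=g\}$. Since $C$ lies inside group $g$, the pseudo-target has density $r_g$ with respect to $P_X$ on $C$. Hence $\widetilde Q_X(\cdot\mid C)=P_X(\cdot\mid C)$ whenever $r_g>0$, and so $\E_{\widetilde Q}[Y\mid C]=\E_P[Y\mid C]$. This is exactly where containment of audit cells in partition groups is used.

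Next we decompose
\[
Q_X(C)\bigl(\E_Q[Y\mid C]-\E_P[Y\mid C]\bigr)=\E_Q[Y\ind_C]-Q_X(C)\,\E_{\widetilde Q}[Y\mid C],
\]
and split the right-hand side into two terms. The first term is $\E_{Q_X}[\ind_C(Q(Y{=}1\mid X)-P(Y{=}1\mid X))]$, which is bounded in absolute value by $\kappa$ because $Y$ is binary. The second term is $\E_{Q_X}[\ind_C\,\mu_P(X)]-Q_X(C)\,\E_{\widetilde Q_X}[\mu_P(X)\mid C]$, where $\mu_P=P(Y{=}1\mid X)\in[0,1]$.

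The main obstacle is the second term, because the conditioning masses $Q_X(C)$ and $\widetilde Q_X(C)$ differ. We plan to rewrite it as
\[
\bigl(\E_{Q_X}-\E_{\widetilde Q_X}\bigr)[\ind_C\mu_P]+\widetilde Q_X(C)\,\E_{\widetilde Q}[\mu_P\mid C]-Q_X(C)\,\E_{\widetilde Q}[\mu_P\mid C].
\]
The first piece is bounded by $\tau_A$, since $\ind_C\mu_P\in[0,1]$. The second piece is bounded by $|\widetilde Q_X(C)-Q_X(C)|\le\tau_A$. This gives the total bound $2\tau_A+\kappa\le 2(\tau_A+\kappa)$, and therefore the factor $\tfrac12$ in the statement.

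If $r_g=0$, then $Q_X(C)\le\widetilde Q_X(C)+\tau_A=\tau_A$. In that case the bound holds trivially, because the absolute gap is at most $1$.

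Chaining these steps on the confidence event and taking the maximum over $C$ yields Proposition~\ref{prop:penalty-lb}.
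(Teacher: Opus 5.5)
Your proof is correct. Its outer skeleton matches the paper's: a Bonferroni event on which $\underline q_C\le Q_X(C)$ and the interval separation lower-bounds $|\E_Q[Y\mid C]-\E_P[Y\mid C]|$, a reduction to a deterministic per-cell inequality, and the within-group cancellation $\E_{\widetilde Q}[Y\mid C]=\E_P[Y\mid C]$.

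The deterministic step is where you differ. The paper reuses the joint bound $\TV(Q,\widetilde Q)\le\tau_A+\kappa$ from the proof of Theorem~\ref{thm:cswcp}, obtained by the triangle inequality through $Q_X(dx)P(dy\mid x)$. It then applies a two-number inequality. Let $a,b$ and $a',b'$ be the $Q$- and $\widetilde Q$-masses of $C\times\{1\}$ and $C\times\{0\}$, with $m=a+b$ and $m'=a'+b'$. Then $|a/m-a'/m'|=|ab'-a'b|/(mm')\le(|a-a'|+|b-b'|)/m\le 2\TV(Q,\widetilde Q)/Q_X(C)$.

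You instead split additively into a conditional-shift piece and a marginal piece. Conditional shift leaves the cell mass $Q_X(C)$ unchanged, so it moves only the numerator $\E[Y\ind_C]$ and costs $\kappa$ once. The marginal mismatch costs $\tau_A$ in the numerator and another $\tau_A$ in the mass. You therefore get $Q_X(C)\,|\E_Q[Y\mid C]-\E_P[Y\mid C]|\le 2\tau_A+\kappa$. This is slightly sharper than the paper's $2(\tau_A+\kappa)$, because a single joint-TV bound lets conditional shift move both $a$ and $b$. The paper's route is more modular in exchange: it needs only the TV bound already in hand plus elementary algebra.

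Your separate $r_g=0$ case is valid but simpler than you make it. There $Q_X(C)\le q_g=r_gp_g=0$ directly. The paper leaves this case implicit when it divides by $m'$.
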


A positive bound rejects the exact partition-mixture transport model. It does
not identify conditional shift because within-cell movement of $X$ can also
change the coarsened mean. Failure to reject is likewise not validation.

\begin{proposition}[No informative guarantee under unrestricted conditional shift]
\label{prop:impossibility}
Fix the source law and unlabeled target marginal $Q_X$. Any binary set method
that guarantees coverage at least $1-\alpha$ for every target conditional law
satisfies
\begin{equation}
\E_{Q_X}|\GammaSet(X)|\geq2(1-\alpha).
\label{eq:impossibility}
\end{equation}
\end{proposition}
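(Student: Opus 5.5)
The plan is a two-point (more generally, randomization) argument over target conditional laws, all sharing the fixed marginal $Q_X$. For binary $\calY=\{0,1\}$, write any set method, possibly randomized and possibly depending on the source calibration sample, as producing $\GammaSet(X)\subseteq\{0,1\}$. Its randomness is independent of the test label. The method sees only source data and the unlabeled target marginal, and none of this depends on which target conditional law is in force. So the conditional probabilities
\[
\pi_0(x)=\Prob\{0\in\GammaSet(x)\},\qquad \pi_1(x)=\Prob\{1\in\GammaSet(x)\}
\]
are the same functions of $x$ under every candidate target conditional law. Here the probability is over the calibration sample and any internal randomness, with the test input fixed at $x$.

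First, consider the two deterministic target conditionals $Q^{(0)}_{Y\mid X}=\delta_0$ and $Q^{(1)}_{Y\mid X}=\delta_1$. Both are admissible, because the guarantee is required for every target conditional law; this is where unrestricted conditional shift enters. Under $Q^{(j)}$ the coverage equals $\E_{Q_X}\pi_j(X)$, so the guarantee gives $\E_{Q_X}\pi_j(X)\ge 1-\alpha$ for $j=0,1$. Second, I would use linearity:
\[
\E|\GammaSet(X)|=\E_{Q_X}\bigl[\pi_0(X)+\pi_1(X)\bigr]\ge 2(1-\alpha).
\]
The expectation on the left is taken jointly over the calibration sample and $X\sim Q_X$, and it does not depend on the conditional law.

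The main point to handle carefully, rather than a hard obstacle, is justifying that $\GammaSet$'s law given $X=x$ is invariant across the two target conditionals. This requires the method to use no labeled target data, which is implicit in ``fix the source law and unlabeled target marginal.'' I would state this explicitly and note that the test point is drawn independently of the calibration data. If the method were allowed labeled target audits, the claim would fail, so the proof should make the information structure precise.
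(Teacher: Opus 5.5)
Your proposal is correct and follows the paper's proof: the two deterministic target conditionals ($Y=0$ almost surely and $Y=1$ almost surely) share $Q_X$, so the method's set has the same law in both worlds, and adding the two coverage requirements gives $\E_{Q_X}|\GammaSet(X)|\geq 2(1-\alpha)$ for randomized procedures as well. Your explicit treatment of the information structure (no labeled target data, any unlabeled target sample drawn from the same $Q_X$, test point independent of calibration) spells out what the paper's one-line invariance claim takes for granted.
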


This lower bound explains why a label-free method becomes nearly vacuous when
the conditional law is left unrestricted. CS-WCP is informative only to the
extent that the finite-partition transport model is credible and the mass
intervals are narrow.

\section{Experiments}
\label{sec:experiments}

\subsection{Questions and protocols}

The experiments ask three questions. First, does propagating finite mass-count
uncertainty protect coverage under a constructed traffic-mixture shift with
shared support? Second, what happens when the same construction is applied to
natural cross-task shifts using an independently frozen partition? Third, how
much set size is paid for that protection, and can a labeled audit detect
departures from the assumed transport model?

The constructed tier contains 336 transfers: 288 QA generator-mixture shifts
and 48 SAMSum policy-mixture shifts. Three predeclared mixture vectors,
$(0.60,0.20,0.20)$, $(0.20,0.60,0.20)$, and $(0.20,0.20,0.60)$, generate all
six ordered source--target pairs with shared support and maximum oracle ratio
three. The traffic component defines the partition before ratio counts,
conformal scores, or test outcomes are opened.

The natural tier contains 336 ordered cross-task transfers from eight frozen
judges over seven non-code domains. A TF--IDF plus SVD feature map and a
$G=3$ $k$-means partition are learned on an independent unlabeled design role.
Disjoint unlabeled source and target roles provide the group counts. Labeled
source calibration, source audit, target audit, optional tuning, and sealed test
roles are separate. Appendix~\ref{app:manifest} records all sample sizes and
identifiers.

The primary metrics are marginal coverage, mean set size, singleton rate,
point undercoverage failures, and familywise certification of a predeclared
$0.85$ coverage floor. Point failures are descriptive and do not themselves
carry familywise guarantees. Every method uses the same frozen score and the
same calibration and sealed-test records.

\subsection{Baselines}

The executed baselines are source CP, oracle WCP when mixture ratios are known,
classifier-odds WCP, kernel mean matching WCP, group plug-in WCP, and clipped
uLSIF WCP. The last two are implementation-level baselines, not substitutes for
full GWCP or CLISF plus corrected WCP. Source CP and group plug-in are also swept
over fixed nominal levels to expose the coverage--size frontier. A target-tuned
group plug-in uses 60 target labels and is reported separately from label-free
methods.

\subsection{Constructed traffic-mixture shifts}

CS-WCP reduces tail failures on the constructed tier
(Table~\ref{tab:constructed}). Its mean coverage is $0.973$, with 13 of 336
transfers falling below $0.90$ empirically. Source CP reaches $0.954$ with 44
failures, while oracle and group plug-in WCP each reach $0.964$ with 27 and 28
failures. The protection costs set size: CS-WCP averages $1.743$ labels in a
binary problem and returns a singleton on $25.6\%$ of items.

\begin{table*}[t]
\centering
\small
\setlength{\tabcolsep}{4pt}
\begin{tabular}{lccccc}
\toprule
Method & Coverage $\uparrow$ & Size $\downarrow$ & Singleton $\uparrow$ &
Point failures $\downarrow$ & FWER certified $\uparrow$\\
\midrule
Source CP & $0.954$ & $1.652$ & $0.342$ & $44$ & $202$\\
Oracle WCP & $0.964$ & $1.702$ & $0.294$ & $27$ & $227$\\
Classifier-odds WCP & $0.957$ & $1.662$ & $0.333$ & $38$ & $208$\\
KMM-WCP & $0.957$ & $1.670$ & $0.324$ & $42$ & $214$\\
Group plug-in WCP & $0.964$ & $1.701$ & $0.295$ & $28$ & $225$\\
Clipped uLSIF WCP & $0.956$ & $1.662$ & $0.332$ & $41$ & $209$\\
CS-WCP & $0.973$ & $1.743$ & $0.256$ & $13$ & $244$\\
\bottomrule
\end{tabular}
\caption{Constructed shared-support traffic-mixture results at
$1-\alpha=0.90$ over 336 transfers. CS-WCP has the fewest point failures and
the most familywise certifications among executed rows, but it also returns the
largest sets.}
\label{tab:constructed}
\end{table*}

A size-matched comparison narrows the claim. Source CP at fixed
$\alpha=0.075$ reaches coverage $0.971$, size $1.74$, and 16 failures. Thus
CS-WCP improves the predeclared same-level source CP row, but most of the
constructed-tier gain can be reproduced by a more conservative source threshold
chosen after examining the frontier. The method's distinctive value is that its
conservatism follows from the ratio confidence budget rather than a retuned
nominal level.

\subsection{Natural cross-task shifts}

Natural task changes violate the exact mixture model more often and widen the
sets (Table~\ref{tab:natural}). Source CP covers $0.882$ on average and fails in
128 transfers. CS-WCP raises coverage to $0.962$ and reduces failures to 48, at
mean size $1.87$. Same-level group plug-in reaches $0.951$ with 59 failures and
size $1.84$.

\begin{table*}[t]
\centering
\small
\setlength{\tabcolsep}{4pt}
\begin{tabular}{lcccc}
\toprule
Method & Coverage $\uparrow$ & Size $\downarrow$ & Point failures $\downarrow$ &
FWER certified $\uparrow$\\
\midrule
Source CP ($\alpha=.10$) & $0.882$ & $1.61$ & $128$ & $181$\\
Classifier-odds WCP ($\alpha=.10$) & $0.900$ & $1.65$ & $112$ & \textit{not recorded}\\
Group plug-in ($\alpha=.10$) & $0.951$ & $1.84$ & $59$ & $256$\\
Group plug-in ($\alpha=.075$) & $0.970$ & $1.89$ & $38$ & $280$\\
CS-WCP ($\alpha=.10$) & $0.962$ & $1.87$ & $48$ & $276$\\
Target-tuned group plug-in & $0.984$ & $1.92$ & $11$ & $301$\\
\bottomrule
\end{tabular}
\caption{Natural cross-task results over 336 transfers with an independently
frozen partition and disjoint mass counts. The target-tuned row consumes 60
target labels per transfer and is not label-free. The fixed conservative group
plug-in row is competitive with CS-WCP, so no universal dominance claim is
made.}
\label{tab:natural}
\end{table*}

The label-dependent row defines the remaining oracle gap. Tuning group plug-in
with 60 target labels reaches $0.984$ coverage with 11 failures, but expands the
mean set to $1.92$. These outcomes delimit the label-free contribution: CS-WCP
protects coverage without target labels for set construction, yet it often
returns the full binary set. A useful deployment interpretation is selective
prediction, with a singleton on a minority of inputs and abstention otherwise.

\subsection{Assumption audit and sensitivity}

Coarsened-moment audits reject equality on 71 of 336 natural transfers and none
of the constructed transfers. Proposition~\ref{prop:penalty-lb} yields a
strictly positive lower bound on $\tau_A+\kappa$ for 16 natural transfers under
pointwise control and three under familywise control, with the largest pointwise
bound equal to $0.074$. Non-rejection does not establish the model, and a
rejection does not separate partition bias from conditional shift.

The separate HumanEval sensitivity tier contains 48 generator-mixture
transfers. CS-WCP has no point failures and 29 familywise certifications,
compared with seven failures and 24 certifications for source CP; all mean set
sizes lie between $1.89$ and $1.94$. The direction agrees with the main result,
but the near-full sets expose the same utility limit.

\section*{Limitations}

The guarantee depends on a fixed finite partition, overlap, and disjoint sample
roles. The additive penalties $\tau_A$ and $\kappa$ are not upper-bounded from
unlabeled target data, so the nominal guarantee can be weak when the partition
misses within-cell shift or the conditional law changes. Current experiments
use a binary correctness label space; mean set sizes near two limit operational
utility. The executed baseline suite does not yet contain full GWCP,
CLISF--CWCP, unknown-subpopulation CP, or DS-CP under matched roles. Natural
results are English-only and use eight open judges; generalization to
human-preference ratings, multilingual evaluation, and repeated frontier APIs
remains untested. Point failure counts are descriptive and must not be read as
familywise error guarantees.

\section*{Ethical Considerations}

Conformal coverage does not establish that an LLM judge is unbiased, accurate,
or suitable for autonomous decisions. A large set should be interpreted as
abstention rather than forced into a single verdict. Group definitions can also
encode or obscure demographic structure, so any deployment partition requires
an application-specific fairness and privacy review. All reported labels retain
their deterministic, constructed, or verifier provenance.

\bibliography{references}

\appendix

\section{Proofs}
\label{app:proofs}

\subsection{Proof of Theorem~\ref{thm:cswcp}}

\begin{proof}
For each group, the source and target counts have binomial marginals
$(m_P,p_g)$ and $(m_Q,q_g)$. The allocated Clopper--Pearson intervals and a
union bound imply that the event
\begin{equation}
\mathcal E=\{p_g\in[\ell_g^P,u_g^P],\ q_g\in[\ell_g^Q,u_g^Q]
\text{ for every }g\}
\end{equation}
has probability at least $1-\delta_w$. On $\mathcal E$, overlap implies that
the true ratio $r_g=q_g/p_g$ lies in $[L_g,U_g]$ for every group. Hence the
true group-constant vector $r$ belongs to $\mathcal W$.

Define the pseudo-target joint law
\begin{equation}
\widetilde Q(dx,dy)=\widetilde Q_X(dx)P(dy\mid x),\qquad
\frac{d\widetilde Q_X}{dP_X}(x)=r_{A(x)}.
\end{equation}
Conditional on the ratio sample and $\mathcal E$, the set in
Equation~\eqref{eq:robust-union} contains $\GammaSet_r$ pointwise. Independence
of the conformal-calibration sample and test point permits the exact WCP
argument, so
\begin{equation}
\Prob_{\widetilde Q}\{Y\notin\GammaSet_{\mathrm{CS}}(X)\mid\mathcal E\}
\leq\alpha.
\end{equation}

Introduce $Q^0(dx,dy)=Q_X(dx)P(dy\mid x)$. By definition,
\begin{equation}
\TV(Q,Q^0)=\E_{Q_X}\TV(Q_{Y\mid X},P_{Y\mid X})=\kappa,
\end{equation}
while $\TV(Q^0,\widetilde Q)=\TV(Q_X,\widetilde Q_X)=\tau_A$.
The triangle inequality therefore gives
$\TV(Q,\widetilde Q)\leq\tau_A+\kappa$. Applying total variation to the
miscoverage event yields conditional miscoverage at most
$\alpha+\tau_A+\kappa$ on $\mathcal E$. Adding
$\Prob(\mathcal E^c)\leq\delta_w$ proves
Equation~\eqref{eq:coverage}.
\end{proof}

\subsection{Proof of Proposition~\ref{prop:endpoint}}

\begin{proof}
For a group-constant vector $v$, Equations~\eqref{eq:weighted-pvalue} and
\eqref{eq:endpoint-coeff} give
\begin{equation}
\pi_y^v(x)=\frac{\sum_gv_ga_g}{\sum_gv_gb_g}.
\end{equation}
When the denominator is positive, $\pi_y^v(x)>\alpha$ if and only if
$\sum_gv_gh_g>0$. This expression is linear in $v$. Its supremum over
$\prod_g[L_g,U_g]$ is obtained coordinatewise by choosing $U_g$ when
$h_g>0$ and $L_g$ otherwise. The supremum is positive exactly when
Equation~\eqref{eq:endpoint-test} holds. If no compatible vector has a
positive denominator, the declared full-set fallback includes $y$.
\end{proof}

\subsection{Proof of Proposition~\ref{prop:penalty-lb}}

\begin{proof}
Let $\widetilde Q$ be the pseudo-target formed with the true ratios. The proof
of Theorem~\ref{thm:cswcp} gives
$\TV(Q,\widetilde Q)\leq\tau_A+\kappa$. Fix an audit cell
$C\subseteq\{A=g\}$. Because the ratio is constant within group $g$,
reweighting cancels after conditioning on $C$, and
\begin{equation}
\E_{\widetilde Q}[Y\mid C]=\E_P[Y\mid C].
\end{equation}

Write $a=Q(C,Y=1)$, $b=Q(C,Y=0)$,
$a'=\widetilde Q(C,Y=1)$, $b'=\widetilde Q(C,Y=0)$,
$m=a+b=Q_X(C)$, and $m'=a'+b'$. Then
\begin{align}
\left|\E_Q[Y\mid C]-\E_{\widetilde Q}[Y\mid C]\right|
&=\frac{|ab'-a'b|}{mm'}\\
&\leq\frac{|a-a'|+|b-b'|}{m}\\
&\leq\frac{2\TV(Q,\widetilde Q)}{Q_X(C)}.
\end{align}
Consequently,
\begin{equation}
\tau_A+\kappa\geq\frac12 Q_X(C)
\left|\E_Q[Y\mid C]-\E_P[Y\mid C]\right|.
\end{equation}
On the Bonferroni event, the source and target mean intervals and the lower
mass bound hold simultaneously. Their minimum separation lower-bounds the
absolute conditional-mean difference, while $\underline q_C\leq Q_X(C)$.
Taking the maximum over the predeclared cells proves
Equation~\eqref{eq:penalty-lb}.
\end{proof}

\subsection{Proof of Proposition~\ref{prop:impossibility}}

\begin{proof}
Consider two target laws with the same $Q_X$. In the first, $Y=0$ almost
surely; in the second, $Y=1$ almost surely. Source data and unlabeled target
data have the same distribution in both worlds, so the algorithm has the same
distribution over set rules. Validity in the first world requires
$\Prob\{0\in\GammaSet(X)\}\geq1-\alpha$, and validity in the second requires
$\Prob\{1\in\GammaSet(X)\}\geq1-\alpha$. Adding the two inequalities gives
$\E|\GammaSet(X)|\geq2(1-\alpha)$, including randomized procedures.
\end{proof}

\section{Complete Algorithm}
\label{app:algorithm}

For each predeclared transfer, the implementation performs the following steps.
The partition, score, candidate labels, tie rule, $\alpha$, and $\delta_w$ are
frozen before ratio counts or outcomes are opened. The method then counts the
$G$ groups on disjoint source and target ratio roles and constructs the $2G$
Clopper--Pearson intervals in Equation~\eqref{eq:rectangle}. Source
nonconformity scores and grouped upper-tail counts are computed once. For every
test input and candidate label, the implementation evaluates
Equation~\eqref{eq:endpoint-test}. It stores all interval endpoints, grouped
counts, endpoint coefficients, candidate decisions, and final sets before test
labels are revealed.

Unit tests cover invariance to a common positive rescaling of all endpoints,
agreement with brute-force corner enumeration for finite rectangles, reduction
to ordinary split conformal when $v\equiv1$, tied scores, empty groups,
infinite endpoints, and the full-set fallback. Each transfer in the
336-transfer confirmatory family receives its multiplicity-adjusted ratio
budget before intervals are constructed.

\section{Experimental Manifest}
\label{app:manifest}

\begin{table*}[t]
\centering
\small
\begin{tabular}{p{2.6cm}p{2.0cm}p{2.4cm}p{6.0cm}}
\toprule
Tier & Transfers & Target labels used & Construction\\
\midrule
Constructed primary & $336$ & none for set construction &
$288$ QA generator-mixture plus $48$ SAMSum policy-mixture transfers; fixed
traffic partition\\
Natural cross-task & $336$ & none for set construction &
$8$ judges $\times42$ ordered pairs over seven non-code domains; partition
frozen on an independent design role\\
HumanEval sensitivity & $48$ & official tests at evaluation &
$8$ judges $\times6$ ordered generator-mixture pairs; reported separately
because the eligible pool is smaller\\
Penalty audit & $336+336$ & independent source and target audits &
audit cells and multiplicity budget frozen before labels are opened\\
\bottomrule
\end{tabular}
\caption{Disjoint experimental tiers. Counts and results are not transferred
across protocols.}
\label{tab:manifest}
\end{table*}

The eight judge models are Phi-3-mini, TinyLlama-1.1B-Chat, Zephyr-7B,
MiniCPM-2B, StableLM-Zephyr-3B, Qwen2.5-3B-Instruct, Gemma-2-2B-Instruct,
and OLMo-2-7B-Instruct. The QA mixtures use three frozen response generators:
Qwen2.5-1.5B-Instruct, Llama-3.2-1B-Instruct, and Gemma-2-2B-Instruct.
The six QA domains are GSM8K, MMLU, SQuAD, BoolQ, TruthfulQA, and StrategyQA.
SAMSum uses three fixed construction policies: reference-consistent,
entity-swapped, and contradiction-injected responses. A separate HumanEval
tier uses executable tests.

Constructed QA components contain 480 items each. Per transfer, the realized
roles contain 150 fit, 150 conformal-calibration, 120 source-ratio, 120
target-ratio, 30 source-audit, 30 target-audit, and 150 sealed-test records.
SAMSum components contain approximately 160 items each, yielding 50
calibration, 38 source-ratio, 38 target-ratio, 12 source-audit, 12 target-audit,
and 50 sealed-test records. Natural cross-task transfers use 60 source plus 80
target items for partition design, disjoint mass-count roles of the same sizes,
120 source-calibration records, 120 source-audit records, 120 target-audit
records, an optional 60-label tuning role, and 140 sealed-test records.

Every stored row contains the transfer and item identifiers, benchmark and
split revisions, role, generator revision, judge revision, prompt hash,
decoding configuration, score, ratio method, partition version, interval
endpoints, prediction set, seed, and output checksum. A row belongs to exactly
one role. Deterministic and constructed labels remain distinct from verifier or
human labels.

\section{Baseline Details and Full Frontier}
\label{app:baselines}

Exact WCP uses the known constructed-mixture ratio. Classifier-odds WCP estimates
the ratio using a balanced domain classifier. KMM-WCP uses a frozen radial-basis
kernel on a 50-dimensional SVD of the shared TF--IDF space. Group plug-in WCP
uses add-half-smoothed point estimates of group masses on the disjoint ratio
roles. Clipped uLSIF WCP fits an unconstrained least-squares importance ratio,
normalizes to unit source mean, and clips post hoc at a fixed cap. It is not the
CLISF plus corrected-WCP procedure of \citet{wang2026clipping}.

\begin{table*}[t]
\centering
\small
\setlength{\tabcolsep}{4pt}
\begin{tabular}{lcccc@{\hskip 12pt}cccc}
\toprule
& \multicolumn{4}{c}{Constructed mixture tier}
& \multicolumn{4}{c}{Natural cross-task tier}\\
\cmidrule(lr){2-5}\cmidrule(lr){6-9}
Method & Cov. & Size & Fail & Cert. & Cov. & Size & Fail & Cert.\\
\midrule
Source CP ($.10$) & $.954$ & $1.65$ & $44$ & $202$ & $.882$ & $1.61$ & $128$ & $181$\\
Source CP ($.075$) & $.971$ & $1.74$ & $16$ & $236$ & $.898$ & $1.65$ & $110$ & $196$\\
Source CP ($.05$) & $.982$ & $1.81$ & $3$ & $270$ & $.927$ & $1.74$ & $80$ & $225$\\
Source CP ($.02$) & $.991$ & $1.86$ & $1$ & $287$ & $.971$ & $1.88$ & $31$ & $278$\\
Group plug-in ($.10$) & $.964$ & $1.70$ & $28$ & $225$ & $.951$ & $1.84$ & $59$ & $256$\\
Group plug-in ($.075$) & $.975$ & $1.76$ & $8$ & $253$ & $.970$ & $1.89$ & $38$ & $280$\\
Group plug-in ($.05$) & $.986$ & $1.83$ & $1$ & $280$ & $.979$ & $1.92$ & $27$ & $295$\\
Target-tuned plug-in & --- & --- & --- & --- & $.984$ & $1.92$ & $11$ & $301$\\
Oracle WCP ($.075$) & $.975$ & $1.76$ & $8$ & $253$ & --- & --- & --- & ---\\
CS-WCP ($.10$) & $.973$ & $1.74$ & $13$ & $244$ & $.962$ & $1.87$ & $48$ & $276$\\
\bottomrule
\end{tabular}
\caption{Coverage--size frontier. CS-WCP appears once at its predeclared level
and ratio budget. Other methods are swept over fixed nominal levels to reveal
the cost of conservatism. ``Cert.'' counts familywise certifications of the
predeclared coverage floor on the sealed evaluation labels.}
\label{tab:frontier}
\end{table*}

\section{Label Provenance and Verification}
\label{app:labels}

GSM8K uses last-number comparison, MMLU uses the normalized option letter,
SQuAD uses official normalization and token F1, BoolQ and StrategyQA use
normalized Boolean matching, TruthfulQA uses a fixed multiple-choice endpoint,
and HumanEval uses official sandboxed tests. SAMSum construction policies are
kept only when deterministic checks verify the intended entity or polarity
change and basic well-formedness.

An independent frozen verifier audits 1,200 stratified items, balanced across
contract-positive and contract-negative labels. Agreement with the deterministic
or construction contract is $0.869$ overall, $0.883$ for positives, and $0.855$
for negatives. Disagreement is highest for strict extraction tasks. Flipping
every audited disagreement in the verifier stress slice changes pooled raw-judge
ECE by at most $0.005$; that scalar sensitivity is provenance evidence only and
is not a contribution or result of this conformal-set paper.

\end{document}